\documentclass[11pt]{article}

\usepackage[preprint]{acl}

\usepackage{times}
\usepackage{latexsym}

\usepackage[T1]{fontenc}

\usepackage[utf8]{inputenc}

\usepackage{microtype}

\usepackage{inconsolata}

\usepackage{graphicx}

\usepackage{enumerate}

\usepackage{amsmath}
\usepackage{amssymb}
\usepackage{mathtools}
\usepackage{amsthm}
\usepackage{amsmath}
\usepackage{algorithm}
\usepackage{algorithmic}
\usepackage{booktabs}
\usepackage{microtype}
\usepackage{graphicx}
\usepackage{subcaption}
\usepackage{booktabs} 
\usepackage[table]{xcolor} 
\usepackage{adjustbox}

\usepackage{float}
\theoremstyle{plain}
\newtheorem{theorem}{Theorem}[section]

\newtheorem{lemma}[theorem]{Lemma}

\theoremstyle{definition}

\theoremstyle{remark}
\newtheorem{remark}[theorem]{Remark}

\newcommand{\method}{\textsc{Rewind}}
\newcommand{\methodfull}{REWIND}

\newenvironment{appendixtableblock}[1][\columnwidth]{%
  \begingroup
  \small
  \setlength{\tabcolsep}{4pt}%
  \renewcommand{\arraystretch}{1.05}%
  \begin{adjustbox}{max width=#1}%
}{%
  \end{adjustbox}%
  \endgroup
}

\newenvironment{enumeratesquish}[2]{
  \begin{list}{\labelenumi}{
    \setlength{\itemsep}{#1}
    \setlength{\labelwidth}{#2}
    \setlength{\leftmargin}{\labelwidth}
    \addtolength{\leftmargin}{\labelsep}
    \setlength{\parsep}{0em}
    \setlength{\topsep}{0em}
  }
}{\end{list}}

\title{REWIND: Uncertainty-Guided Trajectory Repair for Test-Time Reasoning}

\author{
Yi Ding\textsuperscript{1} \quad
Mengyu Wang\textsuperscript{2} \quad
Min Zhou\textsuperscript{3} \quad
Menglin Yang\textsuperscript{1,*} \\[0.4em]
{\small\normalfont\textsuperscript{1}The Hong Kong University of Science and Technology (Guangzhou)} \\
{\small\normalfont\textsuperscript{2}University of Edinburgh \quad \textsuperscript{3}Huawei} \\[0.2em]
{\small\normalfont\textsuperscript{*}Corresponding author: \href{mailto:menglinyang@hkust-gz.edu.cn}{menglinyang@hkust-gz.edu.cn}}
}

\hypersetup{pdftitle={REWIND: Uncertainty-Guided Trajectory Repair for Test-Time Reasoning}, pdfauthor={Yi Ding, Mengyu Wang, Min Zhou, Menglin Yang}}

\begin{document}
\maketitle
\begin{abstract}

An early mistake in the generated reasoning can constrain all subsequent reasoning. Repeated-generation methods, like MCMC-based approaches, attempt to improve output quality by sampling multiple generations, but their effectiveness can be limited by randomly selecting where to regenerate. On MATH500 with Qwen2.5-Math-7B, we find that incorrect responses have substantially higher mean token entropy than correct responses, with an average within-question relative increase of approximately 112\%, suggesting that entropy can signal reasoning errors. Motivated by this finding, we introduce \methodfull{}, a training-free controller that uses entropy dynamics to propose targeted suffix regenerations. Across three model families and five benchmarks, \method{} outperforms all baselines, with gains of 0.51–10.00 percentage points over the strongest baseline. These results demonstrate the effectiveness of targeted, uncertainty-guided trajectory repair across models and tasks. Our code is available at \url{https://anonymous.4open.science/r/rewind-D310}.

\end{abstract}

\section{Introduction}
\label{sec:intro}

Large language models (LLMs) can solve multi-step reasoning problems, but an early incorrect premise or calculation can constrain the rest of a generated solution \cite{wei2022chain,dziri2023faith}. To mitigate such issues caused by sequential decoding, explicit revision methods discard part of that context and give the model another opportunity to continue from an earlier prefix. However, the central question of revision is how to choose a useful cut point without a trained verifier or external feedback.

Recent works have proposed many different approaches to respond to this question. Iterative self-feedback and sampling-based refinement provide ways to revise a draft \cite{madaan2023selfrefine,karan2025reasoning}. MCMC-based Power Sampling chooses a random cut point to regenerate the suffix and then accepts or rejects the proposal using model likelihoods. Its cut-point policy is independent of local confidence and uncertainty. A cut after a mistaken premise retains it, whereas a cut far before it can replace substantial correct work. Since a filter acts only after re-generation, it can accept or reject a completed proposal but cannot guide the re-generation process itself. Therefore, revision-site selection becomes a practical bottleneck, motivating trajectory diagnostics that direct proposals toward candidate regions of instability.

Although continuation sampling is stochastic, its final acceptance is based on the generation statistics, since confidence and uncertainty statistics can contain information about downstream failure \cite{kadavath2022language,zhang2025entropy}. Motivated by this, there may exist signals during the generation process indicating potential re-generation points. Shannon entropy summarizes the full next-token distribution, capturing competition among alternatives, and can be measured from model logits without an additional verifier or reference answers \cite{shannon1948mathematical}. We examine this signal through 20 ordinary Qwen2.5-Math-7B generations per MATH500 question. Among the 463 questions with both correct and incorrect responses under the original boxed-answer evaluator, we find that the mean token entropy of incorrect responses is approximately \textbf{112\% higher} than that of correct responses (Section~\ref{subsec:entropy_observations}). Moreover, on questions with fewer incorrect responses, those errors exhibit higher entropy, suggesting that entropy can identify mistakes that are more likely to be corrected through regeneration. This token-level generation-error association motivates uncertainty-conditioned revision.

Therefore, we propose \methodfull{} (Reasoning Error Windback via INtrinsic Detection), a training-free, entropy-conditioned suffix-proposal controller. A high-entropy peak can occur after an earlier unstable choice has influenced later steps. The controller therefore searches immediately before the peak for the largest discrete entropy increase,  and aligns the cut with a natural delimiter to regenerate the suffix. Then an MH-style score filter decides whether to retain the proposal. This makes the re-generation more targeted to cover the potential mistakes rather than random selection, which may miss critical errors.

REWIND consistently improves reasoning performance across three model families and five benchmarks, achieving the highest accuracy among the baselines. It yields gains of 0.51–10.00 percentage points, with the largest improvement on AIME2025 using DeepSeek-R1-Distill-Qwen-1.5B (DeepSeek-1.5B), from 16.67\% to 26.67\%. Controlled ablations show that entropy-gradient backtracking and boundary alignment both improve the performance. Further evaluations demonstrate that the improvements transfer to few-shot prompting and code generation and remain stable across random seeds. Together, these results establish REWIND as an effective approach to targeted test-time trajectory repair.

In summary, our contributions are as follows:

\begin{enumeratesquish}{0em}{0.7em}
    \item[1.] \textbf{An empirical entropy--correctness association.} Through repeated generations on MATH500, we find a clear association between a question's error frequency and the token entropy of its responses, indicating that entropy provides a useful signal of potential generation errors.
    \item[2.] \textbf{A training-free trajectory-repair method.} We introduce \method{}, which locates entropy-gradient spikes before uncertainty peaks, aligns revision points with natural delimiters, and filters regenerated suffixes using an MH-style score, without additional training or an external verifier.
    \item[3.] \textbf{Comprehensive empirical evaluation.} Experiments across three model families and five benchmarks show that \method{} achieves the best performance among all baselines, improving over random-proposal MCMC by 0.51--10.00 percentage points. Controlled ablations and evaluations on few-shot reasoning and code generation further support the method's design and its transfer across tasks.
\end{enumeratesquish}

\section{Related Work}
\label{sec:related_work}
\paragraph{Training-free test-time scaling.}

Existing inference-time reasoning methods improve model outputs through either candidate exploration or iterative refinement. Self-consistency aggregates independently sampled solutions \cite{wang2023selfconsistency}, while Tree of Thoughts and MCTS-based approaches explicitly explore alternative reasoning branches \cite{yao2023tree,zhou2023language,liu2025monte}. In contrast, iterative refinement methods revise existing outputs rather than exploring independent reasoning paths. Self-Refine uses model-generated textual feedback to guide revisions \cite{madaan2023selfrefine}, although intrinsic self-correction can be unreliable without informative feedback \cite{huang2023large}. Sampling-based methods offer another refinement mechanism.
Early work such as CGMH applies Metropolis--Hastings sampling
to constrained sentence generation through word-level edits
\cite{miao2019cgmh}, while block MH methods propose
whole-sequence rewrites using a language model
\cite{forristal2023block}.
More recently, Power Sampling employs MH-based suffix
proposals to favor sequences under a sharpened model
distribution \cite{karan2025reasoning}.
Building on suffix-based refinement, \method{} shifts the focus from how candidate revisions are generated to where they should be applied, enabling more targeted refinement.

\paragraph{Uncertainty-guided control.}
Model-derived signals have been used to guide inference at different levels. At the token level, DoLa contrasts predictions across layers, while typical decoding constrains token information content \cite{chuang2024dola,meister2023locally}. At the response level, adaptive consistency uses uncertainty to allocate additional samples \cite{aggarwal2023let}, while semantic entropy quantifies uncertainty across semantically equivalent generations \cite{farquhar2024detecting,kuhn2023semantic}. At the search level, Entro-duction uses entropy and varentropy to decide whether to deepen, expand, or terminate reasoning exploration \cite{zhang2025entropy}.

Recent work connects token-level uncertainty with the branching of generation trajectories. \citet{bigelow2024forking} identify forking tokens by resampling alternative continuations and examining their downstream outcomes. \citet{wang2025beyond} further associate high-entropy tokens with reasoning forks and exploit these positions to focus RLVR policy-gradient updates. At inference time, DeepConf \citep{fu2026deep} aggregates token confidence to filter or terminate reasoning traces. \method{} uses local uncertainty to guide repair within an existing response: entropy-gradient backtracking and boundary alignment select a cut point for suffix regeneration, without updating model parameters.

\section{Preliminaries and Observations}
\label{sec:prelim_observations}

We first review Power Sampling to establish the technical background. We then present an empirical study of token entropy across repeated reasoning generations, demonstrating its potential as a during-generation signal for detecting errors.

\subsection{Power Sampling for LLMs}
\label{subsec:power_sampling}
For a question $q$, an autoregressive LLM defines a distribution over responses $\mathbf{y}$ through
\begin{equation}
 p_\theta(\mathbf{y}\mid q)
 =\prod_{t=1}^{T_{\mathbf{y}}}
 p_\theta(y_t\mid q,y_{<t}),
\end{equation}
where $T_{\mathbf{y}}$ includes the terminating token. Power Sampling~\cite{karan2025reasoning} favors higher-likelihood complete responses using
\begin{equation}
\begin{aligned}
 \pi_\alpha(\mathbf{y}\mid q)
 &=\frac{p_\theta(\mathbf{y}\mid q)^\alpha}{Z_\alpha(q)},\\
 Z_\alpha(q)&=\sum_{\mathbf{y}'}p_\theta(\mathbf{y}'\mid q)^\alpha,
 \qquad \alpha\geq1.
\end{aligned}
\label{eq:power_target}
\end{equation}
The normalization is over complete responses. Consequently, this target differs from applying temperature $1/\alpha$ separately to each next-token distribution. Because $Z_\alpha(q)$ is intractable, a Metropolis--Hastings (MH) update can use a proposal kernel $r(\mathbf{y}'\mid\mathbf{y},q)$ and accept with probability
\begin{equation}
 A_q(\mathbf{y},\mathbf{y}')=
 \min\!\left\{1,
 \frac{p_\theta(\mathbf{y}'\mid q)^\alpha
 r(\mathbf{y}\mid\mathbf{y}',q)}
 {p_\theta(\mathbf{y}\mid q)^\alpha
 r(\mathbf{y}'\mid\mathbf{y},q)}\right\}.
\label{eq:power_mh}
\end{equation}
The unknown normalizer cancels. A random-suffix proposal chooses a cut point and autoregressively regenerates the continuation. Equation~\ref{eq:power_mh} gives the generic MH rule and Section~\ref{sec:method} specifies our state-dependent proposal and implemented MH-style filter.

\subsection{Empirical Entropy--Error Associations}
\label{subsec:entropy_observations}
To examine the relationship between token entropy and response correctness, we collect repeated generations for each MATH500 question. Let $\mathcal{Q}$ denote the set of 500 questions. For each $q\in\mathcal{Q}$, we sample $N=20$ responses from Qwen2.5-Math-7B. Let $\mathbf{y}_{q, i}$ denote the $i$-th response for question $q$, $\ell_{q, i}$ its token length, and $c_{q, i}\in\{0,1\}$ its correctness label under the original boxed-answer evaluator. 
Each response consists of a sequence of generated tokens, $\mathbf{y}_{q,i}=(t^0_{q,i},t^1_{q,i},\ldots,t^{\ell_{q,i}-1}_{q,i})$. For each position $j$, we record the full-vocabulary next-token distribution before applying sampling transformations. Let $\mathcal{V}$ denote the vocabulary. The distribution $p_{q,i,j}$ and its entropy $H_{q,i,j}$ at position $j$ are defined as:
\begin{equation}
\begin{aligned}
p_{q,i,j}(v)
&= p_\theta\bigl(v \mid q,\mathbf{y}_{q,i,<j}\bigr),\\
H_{q,i,j}
&= -\sum_{v\in\mathcal{V}}p_{q,i,j}(v)\log p_{q,i,j}(v),
\end{aligned}
\label{eq:observation_token_entropy}
\end{equation}
where $\mathbf{y}_{q,i,<j}=(t^0_{q,i},\ldots,t^{j-1}_{q,i})$ is the generated prefix before position $j$, and is empty when $j=0$.

For $s\in\{+,-\}$, let $\mathcal{I}_q^s$ denote the indices of responses in class $s$, and let $n_q^s=\sum_{i\in\mathcal{I}_q^s}\ell_{q,i}$ be the total number of tokens in that class for question $q$. The mean entropy for class $s$ is
\begin{equation}
h_q^s
=
\frac{1}{n_q^s}
\sum_{i\in\mathcal{I}_q^s}
\sum_{j=0}^{\ell_{q,i}-1}
H_{q,i,j},
\qquad n_q^s>0.
\label{eq:question_class_entropy}
\end{equation}
Thus, $h_q^s$ is the mean entropy pooled over all tokens from responses in class $s$ for question $q$.

\paragraph{Paired relative increase.}
The paired set is
\begin{equation}
 \mathcal{Q}_{\pm}
 =\{q\in\mathcal{Q}:n_q^+>0,\ n_q^->0,\ h_q^+>0\}.
\end{equation}
There are 463 paired questions. For each, define
\begin{equation}
 r_q=\frac{h_q^- - h_q^+}{h_q^+},\qquad
 \overline r=\frac{1}{|\mathcal{Q}_{\pm}|}
                  \sum_{q\in\mathcal{Q}_{\pm}}r_q.
\label{eq:mean_question_relative_increase}
\end{equation}
The mean per-question relative increase is approximately \textbf{112\%} (median: \textbf{83.61\%}). Incorrect-response mean token entropy exceeds the correct-response mean on \textbf{88.12\%} of paired questions, with an increase of \textbf{at least 70\% on 55.51\%}.

\paragraph{Response-level diagnostics.}
To complement the within-question analysis, we test whether entropy summaries of individual responses can distinguish incorrect from correct answers. For each nonempty response, we compute its peak token entropy, its largest adjacent-token entropy increase, and its mean token entropy. Table~\ref{tab:entropy_diagnostics} reports the average value of each statistic for correct and incorrect responses, along with its AUROC for ranking incorrect responses above correct ones. All three statistics are higher on average for incorrect responses, and mean token entropy provides the strongest discrimination (AUROC 0.746). These results further show that token-level entropy can help distinguish correct from incorrect responses.

\begin{table}[t]
\centering
\caption{\textbf{Entropy diagnostics on MATH500.} The same Qwen2.5-Math-7B study yields 9,951 nonempty responses after excluding 49 empty ones. Scores exclude prompt, padding, and EOS. AUROC predicts incorrect original labels. Brackets give 95\% CIs from 2,000 whole-question bootstrap resamples.}
\label{tab:entropy_diagnostics}
\footnotesize
\setlength{\tabcolsep}{2pt}
\begin{adjustbox}{max width=\columnwidth}
\begin{tabular}{@{}lrrr@{}}
\toprule
Signal & \shortstack{Correct\\mean} & \shortstack{Incorrect\\mean} & \shortstack{AUROC\\{}[95\% CI]} \\
\midrule
Peak entropy & 3.147 & 3.787 & 0.677 [0.660, 0.694] \\
Max entropy gradient & 1.912 & 2.655 & 0.724 [0.711, 0.736] \\
Mean token entropy & 0.177 & 0.371 & \textbf{0.746 [0.731, 0.760]} \\
\bottomrule
\end{tabular}
\end{adjustbox}
\end{table}

\paragraph{Entropy across success-count groups.} 
We further examine whether the entropy of incorrect responses varies with how often a question is answered correctly. We group the questions by their number of correct responses among 20 generations and compare the mean entropy of incorrect responses across groups. For each question $q\in\mathcal{Q}$, let $m_q=\sum_{i=1}^{20}c_{q,i}$ denote the number of correct responses.
We group consecutive counts in pairs, keeping 0 and 19 separate:
\begin{equation}
 \mathcal{B}=\bigl(\{0\},\{1,2\},\ldots,\{17,18\},\{19\}\bigr).
\end{equation}
For each bin $B\in\mathcal{B}$, let $\mathcal{G}_B$ be the set of questions whose number of correct responses falls in $B$, and let $\mathcal{G}_B^-$ be the subset with at least one incorrect response. We define $y_B$ as the average, across questions in $\mathcal{G}_B^-$, of the mean token entropy of incorrect responses:
\begin{equation}
\begin{aligned}
 y_B&=\frac{1}{|\mathcal{G}_B^-|}
             \sum_{q\in\mathcal{G}_B^-}h_q^-.
\end{aligned}
\label{eq:incorrect_group_bar}
\end{equation}
Based on the calculated \(y_B\) values, Figure~\ref{fig:entropy_incorrect_groups} shows the mean entropy of incorrect responses across success-count groups. The clear upward trend indicates that questions answered correctly more often tend to have higher-entropy incorrect responses. Therefore, higher entropy may help identify errors that are more likely to be corrected through re-generation.

\begin{figure}[t]
\centering
\includegraphics[width=\columnwidth]{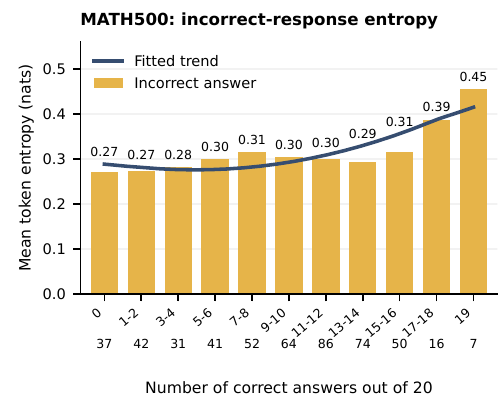}
\caption{\textbf{Incorrect-response entropy on MATH500.} The x-axis groups questions by the number of correct responses out of 20. Yellow bars show equal-question means of incorrect-response entropy (Equation~\ref{eq:incorrect_group_bar}). Numbers above bars give entropy in nats; numbers below bin labels count questions in each group.}
\label{fig:entropy_incorrect_groups}
\end{figure}

This group-level analysis complements the paired and response-level comparisons and further motivates the uncertainty-conditioned proposals developed next. These analyses establish that entropy carries information about errors at the response level; whether it also identifies useful revision \emph{sites} is an empirical question we test directly through proposal ablations in Section~\ref{subsec:mechanism_analysis}.

\section{Method}
\label{sec:method}
Motivated by the observed entropy patterns, we propose \method{}, an entropy-guided method for selecting where to revise a reasoning trajectory. Figure~\ref{fig:2} summarizes the proposal policy. Here, $\mathbf{x}$ denotes the concatenation of the prompt and response, and $L_{\text{context}}$ is the prompt length. Only generated positions are eligible for revision.

\begin{figure*}[t]
    \centering
    \includegraphics[width=0.9\linewidth]{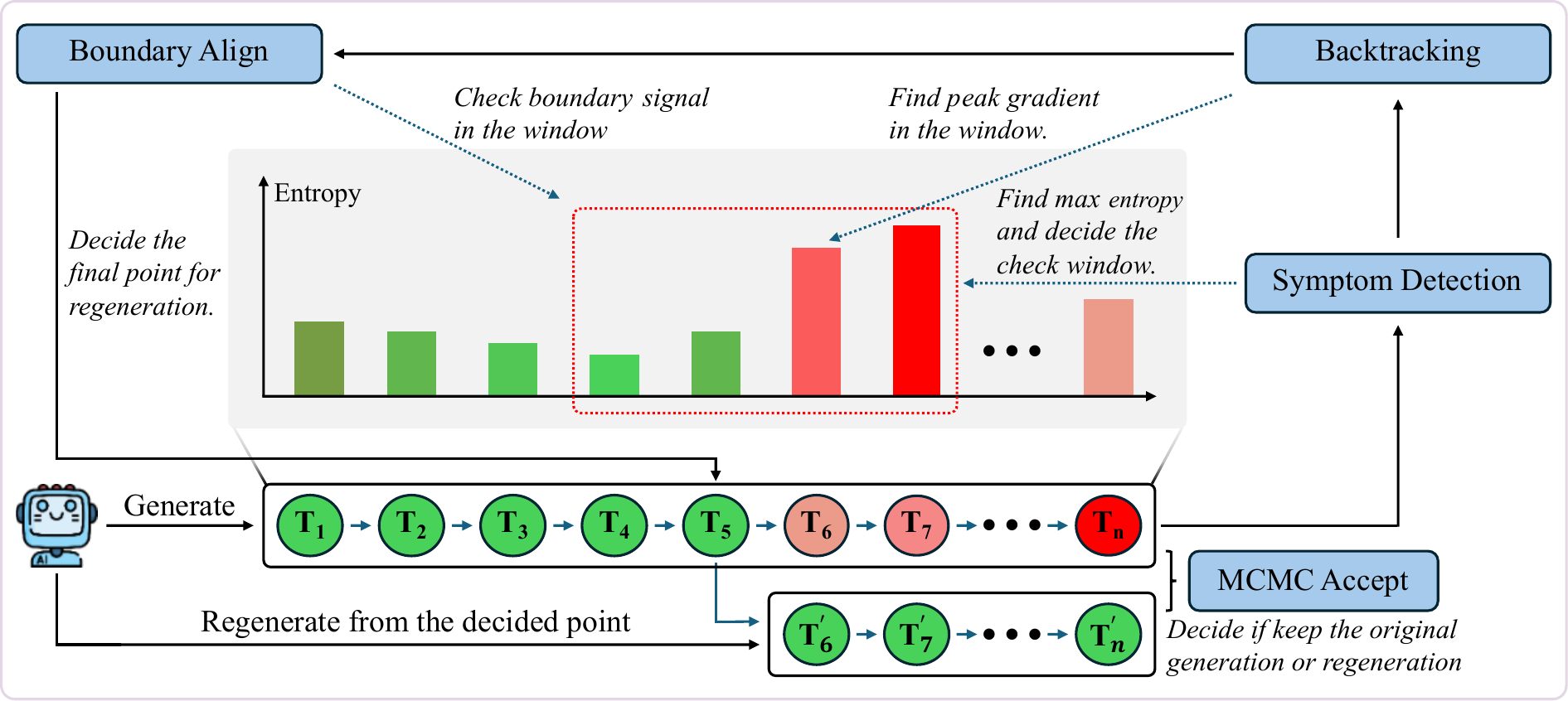}
    \caption{\textbf{REWIND proposal policy.} The entropy peak $t^*$ anchors gradient backtracking to $t_{\text{start}}$. Boundary alignment gives the suffix start $t_{\text{aligned}}$; regeneration and score-based acceptance complete the proposal.}
    \label{fig:2}
\end{figure*}

\subsection{Entropy-Guided Suffix Proposals}
\label{subsec:entropy_guided}
\label{subsubsec:why_entropy}
We use token entropy as the default diagnostic $M_t$ (Equation~\ref{eq:observation_token_entropy}). One revision step has four phases.

\noindent\textbf{Phase 1: Peak Selection.}
Select the highest-entropy generated position:
\begin{equation}
    t^* = \operatorname*{arg\,max}_{L_{\text{context}}<t\leq|\mathbf{x}|} M_t.
\label{eq:rewind_peak}
\end{equation}

\noindent\textbf{Phase 2: Gradient Backtracking.}
Choose the largest difference between adjacent generated positions within a backward distance $w$ of the peak:
\begin{equation}
\begin{aligned}
    \Delta M_t &= M_t-M_{t-1},\quad t>L_{\text{context}}+1,\\
    t_{\text{start}} &= \operatorname*{arg\,max}_{\substack{\max(L_{\text{context}}+2,t^*-w)\leq t\leq t^*}}\Delta M_t.
\end{aligned}
\label{eq:rewind_backtrack}
\end{equation}
If the peak is the first generated token, set $t_{\text{start}}=t^*$ without computing a difference. Thus $w$ specifies a backward distance, with at most $w+1$ candidate positions.

\noindent\textbf{Phase 3: Boundary Alignment.}
To avoid cutting through an incomplete span, search the preceding $k$ tokens for a delimiter:
\begin{equation}
    t_{\text{aligned}}=1+\max\{t\in\mathcal{B}_k(t_{\text{start}})\mid x_t\in\mathcal{S}_{\text{bound}}\},
\label{eq:rewind_boundary}
\end{equation}
where $\mathcal{B}_k(a)=[\max(L_{\text{context}}+1,a-k),a)$ contains integer positions and $\mathcal{S}_{\text{bound}}$ contains tokens whose decoded text includes a delimiter. If none is found, set $t_{\text{aligned}}=t_{\text{start}}$. Keep $\mathbf{x}_{<t_{\text{aligned}}}$, including any selected delimiter, and regenerate the suffix to obtain $\mathbf{x}'$.

\noindent\textbf{Phase 4: MH-Style Acceptance.}
Let $q_{\text{LM}}$ denote the actual next-token proposal distribution. For the shared suffix start $a=t_{\text{aligned}}$, compare both sequences over $m=\min(|\mathbf{x}|,|\mathbf{x}'|)-a+1$ tokens:
\begin{equation}
\begin{aligned}
    \log r_{\text{suffix}} &= S(\mathbf{x}',a;m)-S(\mathbf{x},a;m),\\
    S(\mathbf{z},a;m) &= \sum_{t=a}^{a+m-1}\left[
    \begin{aligned}
    &\alpha\log p_\theta(z_t\mid z_{<t})\\
    &\quad-\log q_{\text{LM}}(z_t\mid z_{<t})
    \end{aligned}\right].
\end{aligned}
\label{eq:acceptance_ratio}
\end{equation}
Accept $\mathbf{x}'$ if $\log u<\min(0,\log r_{\text{suffix}})$ for $u\sim\mathcal{U}(0,1)$; otherwise retain $\mathbf{x}$. This shared-horizon, state-dependent filter is MH-style: it omits forward/reverse site-selection probabilities and does not establish exact MH stationarity. Each subsequent iteration recomputes the entropy trajectory of the retained sequence. Appendix~\ref{app:algorithm} gives the pseudocode.

All signals and sequence scores are computed from the fixed language model, requiring no additional training or external verifier.

\paragraph{Integration with Power Sampling.}
Power Sampling performs $K$ random-cut proposals in each of $B$ blocks; \method{} then adds one entropy-guided proposal to the retained sequence. All proposals share the acceptance score in Equation~\ref{eq:acceptance_ratio}.

\subsection{Theoretical Intuition}
For prefix $s_t=\mathbf{x}_{<t}$ and latent interpretation $Z$, the mixture $p(x\mid s_t)=\sum_z p(z\mid s_t)p(x\mid s_t,z)$ yields $H_t=A_t+B_t$, where $A_t$ is expected within-interpretation entropy and $B_t=I(X_t;Z\mid s_t)$ measures cross-interpretation disagreement. A sharp increase in $B_t$ can create a gradient spike before a later entropy peak. Appendix~\ref{app:appendix_theory} gives sufficient conditions for this local pattern; it motivates candidate selection without guaranteeing a correct retained prefix or causal error localization.

\section{Experiments}
\label{sec:experiments}

\subsection{Experimental Setup} 
\label{subsec:setup}

In the main comparison, we evaluate the effectiveness of our \method{} controller across three model families to assess cross-model robustness. Specifically, we use Qwen2.5-Math-7B \cite{yang2024qwen2} as a representative specialist math model, Llama-3.1-Tülu-3-8B-DPO (Tülu-8B) \cite{lambert2024tulu} as a generalist instruction-tuned baseline, and DeepSeek-1.5B \cite{guo2025deepseek} as a lightweight reasoning-distilled model. All Tülu-8B experiments use this same DPO checkpoint.

We report single-output accuracy on five benchmarks spanning varying difficulty levels: GSM8K (grade-school math) \cite{cobbe2021training}, MATH500 (competition math) \cite{hendrycks2021measuring}, the GPQA Diamond subset (198 questions) \cite{rein2024gpqa}, AIME2025 (frontier math competition) \cite{aops2025aime_i,aops2025aime_ii}, and the BBH formal fallacies task (250 questions) \cite{suzgun2022challenging}. The comparison in Table~\ref{tab:main_results} reports the mean over three independent runs with different random seeds. To further assess robustness under common evaluation protocols and broader settings, Appendix~\ref{app:fewshot_cot} reports standard few-shot CoT prompting, Appendix~\ref{app:additional_generalization} reports code-generation results, and Appendix~\ref{app:multiseed_stability} reports standard deviations for representative Qwen2.5-Math settings.

\paragraph{Baselines.} We compare decoding strategies across four categories. First, \textit{distribution- and decoding-based} methods include Vanilla, Temperature Sampling ($T=0.25$), Typical-p ($p=0.95$) \cite{meister2023locally}, and DoLa \cite{chuang2024dola} (layer-contrastive decoding). Second, \textit{revision-based} methods include Self-Refine \cite{madaan2023selfrefine}, which iteratively regenerates and edits the response based on the model's own feedback. Third, \textit{MCMC-based} sampling is represented by Power Sampling \cite{karan2025reasoning}. Fourth, \textit{candidate selection} includes Best-of-$N$ (BoN@10): we sample $N=10$ responses for each input and select the candidate with the highest mean token log-likelihood.

\paragraph{Implementation.}
We use Power Sampling in 16 blocks with temperature $T=0.25$ and target power $\alpha=4$. Each active block extends the retained sequence and performs $K=10$ random-cut MCMC proposals; \method{} then attempts one additional entropy-guided revision with $w=5$ and $k=10$. All other settings, including prompts, generation limits, and acceptance scoring, match MCMC. Implementation details are provided in Appendix~\ref{app:algorithm}. Our code is available at \url{https://anonymous.4open.science/r/rewind-D310}.

\subsection{Main Results}
\label{subsec:main_results}

\method{} achieves the highest mean accuracy in all 15 model--benchmark settings (Table~\ref{tab:main_results}).

\begin{table}[t]
\centering
\caption{\textbf{Main results.} Accuracy (\%) averaged over three runs. \textbf{Best} and \underline{second-best} are highlighted. Vanilla uses $T=1$.}
\label{tab:main_results}

\vspace{0.2cm}

\resizebox{\columnwidth}{!}{%
\begin{tabular}{l@{\hspace{2pt}} c@{\hspace{5pt}} c@{\hspace{5pt}} c@{\hspace{5pt}} c@{\hspace{5pt}} c}
\toprule
\textbf{Methods} & \textbf{GSM8K} & \textbf{MATH500} & \textbf{GPQA} & \textbf{AIME2025} & \textbf{BBH} \\
\midrule

\multicolumn{6}{c}{\textit{Qwen2.5-Math-7B (Specialist)}} \\
\midrule
Vanilla      & $61.41$ & $48.13$ & $26.77$ & $2.22$  & $63.73$ \\
Temperature  & $75.86$ & $66.80$ & $34.68$ & $7.78$  & $71.20$ \\
Typical-p    & $78.26$ & $67.40$ & $31.31$ & $7.78$  & $74.40$ \\
Self-refine  & $77.63$ & $68.40$ & $32.82$ & $6.67$  & $69.73$ \\
DoLa         & $77.86$ & $70.20$ & $34.84$ & $10.00$ & $73.60$ \\
MCMC   & $79.45$ & $72.60$ & \underline{$35.35$} & $10.00$ & \underline{$75.20$} \\
BoN@10       & \underline{$81.85$} & \underline{$73.00$} & $32.32$ & \underline{$13.33$} & $65.73$ \\
\rowcolor{gray!20}\textbf{\method}& $\textbf{84.31}$ & $\textbf{77.60}$ & $\textbf{35.86}$ & $\textbf{16.67}$ & $\textbf{77.20}$ \\
\addlinespace
\hline
\addlinespace
\multicolumn{6}{c}{\textit{Llama-3.1-Tülu-3-8B-DPO (Generalist)}} \\
\midrule
Vanilla      & $86.27$ & $44.20$ & $24.74$ & $1.11$  & $56.13$ \\
Temperature  & $88.17$ & $44.80$ & $25.25$ & $1.11$  & $56.93$ \\
Typical-p    & $87.67$ & $46.20$ & \underline{$28.79$} & $1.11$  & $54.80$ \\
Self-refine  & $87.87$ & $45.60$ & $24.24$ & $1.11$  & $55.33$ \\
DoLa         & $88.32$ & $45.00$ & $25.76$ & $1.11$  & $57.33$ \\
MCMC   & \underline{$89.08$} & \underline{$50.40$} & $27.27$ & \underline{$2.22$}  & \underline{$57.73$} \\
BoN@10       & $88.40$ & $50.20$ & $26.77$ & $1.11$ & $57.47$ \\
\rowcolor{gray!20}\textbf{\method}& $\textbf{90.90}$ & $\textbf{52.00}$ & $\textbf{29.79}$ & $\textbf{3.33}$  & $\textbf{59.33}$ \\
\addlinespace
\hline
\addlinespace

\multicolumn{6}{c}{\textit{DeepSeek-R1-Distill-Qwen-1.5B (Reasoning)}} \\
\midrule
Vanilla      & $63.84$ & $55.40$ & $27.27$ & $13.33$ & $60.40$ \\
Temperature  & $67.93$ & $57.00$ & $28.79$ & $13.33$ & $62.27$ \\
Typical-p    & $66.56$ & $59.20$ & \underline{$29.80$} & $14.44$ & $64.93$ \\
Self-refine  & $68.51$ & $55.80$ & $27.77$ & $13.33$ & $62.00$ \\
DoLa         & $71.65$ & $58.40$ & \underline{$29.80$} & $14.44$ & \underline{$65.60$} \\
MCMC   & \underline{$73.69$} & $61.20$ & $29.29$ & \underline{$16.67$} & $64.00$ \\
BoN@10       & $70.20$ & \underline{$61.40$} & $28.79$ & \underline{$16.67$} & $65.20$ \\
\rowcolor{gray!20}\textbf{\method}& $\textbf{77.81}$ & $\textbf{64.40}$ & $\textbf{31.81}$ & $\textbf{26.67}$ & $\textbf{66.80}$ \\

\bottomrule
\end{tabular}%
}
\end{table}

For Qwen2.5-Math-7B, \method{} improves MATH500 from 66.80\% under temperature sampling to 77.60\%, and from 72.60\% under standard MCMC to 77.60\% by adding a guided revision after the shared $K=10$ random MCMC steps per block. On DeepSeek-1.5B, \method{} reaches 26.67\% on AIME2025 compared with 16.67\% for MCMC under the same blockwise protocol. Appendix~\ref{app:multiseed_stability}, Table~\ref{tab:multiseed_stability}, expands the corresponding three-run main results with standard deviations.

\subsection{Signal Comparison and Analysis}
\label{subsec:mechanism_analysis}

We evaluate both the choice of trigger signal and the strategy for turning that signal into a revision point. Our \method{} is built on two distinct design choices: (1) backtracking from the uncertainty peak to an earlier entropy-gradient spike (gradient-based intervention selection) and (2) aligning revisions with coarse semantic boundaries. To test whether these choices contribute beyond generic regeneration, Table~\ref{tab:ablation} reports an ablation study on the MATH500 dataset using the Qwen2.5-Math model.

\begin{table}[t]
\centering
\caption{\textbf{Proposal ablation on Qwen2.5-Math-7B/MATH500.} Accuracy (\%) with $K=10$ random MCMC steps per block. Revision variants add a suffix proposal. Local variants use the peak window; ``grad.'' uses its largest entropy increase.}
\label{tab:ablation}
\small
\setlength{\tabcolsep}{3pt}
\begin{tabular}{llr}
\toprule
Variant & Site / alignment & Acc. \\
\midrule
MCMC & Global random / none & 72.60 \\
Random delimiter & Global random / full & 73.20 \\
Peak revision & Peak / full & 74.20 \\
No boundary & Local grad. / none & 75.00 \\
Random in window & Local random / full & 75.20 \\
Restricted boundary & Local grad. / . and newline & 76.80 \\
\method{} & Local grad. / full & 77.60 \\
\bottomrule
\end{tabular}
\end{table}

\paragraph{Comparing Intervention-Point Localization.}
Entropy-gradient backtracking improves accuracy to 77.60\%, compared with 74.20\% for Peak Revision and 72.60\% for random-proposal MCMC. It also outperforms Random-in-Window (75.20\%) by 2.40 percentage points. Random Delimiter and Random-in-Window each add one suffix proposal per block with the same generation and acceptance settings as \method{}; the respective 4.40- and 2.40-point gaps therefore isolate site selection at the same proposal-count budget. These results show that choosing the largest entropy increase within the peak window yields more effective revision sites than revising at the peak or at a random position in the same window.

\paragraph{The Role of Boundary Alignment.}
Boundary alignment increases accuracy from 75.00\% to 77.60\%, a gain of 2.60 percentage points. Random Delimiter revision achieves 73.20\%, showing that boundary cues are most useful when combined with targeted site selection. Restricting the delimiter set to \texttt{.} and \texttt{\textbackslash n} still achieves 76.80\%, only 0.80 points below the full set. These results show that coarse boundary alignment contributes to the gain and remains effective with a small set of delimiters.

\paragraph{Separating Signal Choice from Intervention Strategy.}
For each statistic, we define a revision score $M_t$ before applying peak selection or gradient tracing. Maximum probability, probability gap, and Top-$K$ embedding similarity are reversed, while vocabulary-probability skewness and excess kurtosis are negated. Temporal KL uses $D_{\mathrm{KL}}(p_t\Vert p_{t-1})$. All variants share the suffix-generation and acceptance procedure; Appendix~\ref{app:signal_robustness} gives the exact score definitions. For a fixed $M_t$, peak revision selects its maximum, whereas tracing selects the largest increase $\Delta M_t$ in the preceding window. Table~\ref{tab:main_metric_strategy} compares these strategies under six signal configurations. Tracing improves observed MATH500 accuracy for all six, with gains ranging from 0.2 points for skewness to 3.4 for entropy. Temporal KL also gains 3.2 points, supporting the utility of tracing beyond Shannon entropy in this setting.
\begin{table}[t]
\centering
\caption{\textbf{Signal choice vs. intervention strategy.} MATH500 accuracy (\%) on Qwen2.5-Math-7B for six signal configurations. $\Delta$ is tracing minus peak revision.}
\label{tab:main_metric_strategy}
\small
\setlength{\tabcolsep}{4pt}
\begin{tabular}{lrrr}
\toprule
Signal & Peak & Tracing & $\Delta$ \\
\midrule
Shannon entropy & 74.20 & 77.6 & +3.4 \\
Temporal KL & 71.80 & 75.0 & +3.2 \\
Top-$K$ similarity ($K=5$) & 73.40 & 74.20 & +0.8 \\
Top-$K$ similarity ($K=3$) & 72.60 & 73.40 & +0.8 \\
Skewness & 73.20 & 73.40 & +0.2 \\
Kurtosis & 72.00 & 73.00 & +1.0 \\
\bottomrule
\end{tabular}
\end{table}

\paragraph{Trigger Choice across Benchmarks.}
Among the evaluated trigger configurations, entropy performs best on all three benchmarks as shown in Table~\ref{tab:metric_robustness}. Alternatives show different transfer patterns. Temporal KL improves MATH500 over MCMC (75.00\% vs. 72.60\%), but lowers GSM8K accuracy and ties MCMC on AIME2025. Top-$K$ similarity with $K=5$ and kurtosis also improve MATH500, yet fall below MCMC on AIME2025 (7.78\% and 8.89\% vs. 10.00\%). Skewness gives smaller gains on all three. These results indicate that improving intervention-point selection for a given signal does not necessarily make it a strong trigger across tasks. Together with the within-question observations in Section~\ref{subsec:entropy_observations}, the signal comparisons support entropy as the default trigger for the evaluated controller.

\begin{table}[t]
\centering
\caption{\textbf{Trigger signals across benchmarks.} Accuracy (\%) on Qwen2.5-Math-7B.}
\label{tab:metric_robustness}
\small
\setlength{\tabcolsep}{3pt}
\begin{tabular}{lrrr}
\toprule
Trigger signal & GSM8K & MATH & AIME \\
\midrule
MCMC & 79.45 & 72.60 & 10.00 \\
\midrule
Shannon entropy & \textbf{84.31} & \textbf{77.60} & \textbf{16.67} \\
Skewness & 80.97 & 73.40 & 12.22 \\
Top-$K$ similarity ($K=3$) & 76.19 & 73.40 & 12.22 \\
Temporal KL & 77.86 & 75.00 & 10.00 \\
Kurtosis & 80.59 & 73.00 & 8.89 \\
Top-$K$ similarity ($K=5$) & 79.91 & 74.20 & 7.78 \\
\bottomrule
\end{tabular}
\end{table}

\paragraph{Answer-level intervention utility.}
Across the three main-table runs on MATH500, \method{} averages 46 repairs and 21 regressions per run relative to MCMC, yielding 25 net corrections (+5.00 points; Appendix~\ref{app:intervention_analysis}).

\subsection{Inference Budget and BoN Comparison}
\label{subsec:cost}
Relative to single-pass temperature sampling, the reported Qwen2.5-Math generation-token ratios are $8.84\times$ for MCMC and $9.26\times$ for \method{}. BoN@10 samples ten independent responses per input, corresponding to a nominal $\sim10\times$ generation-token budget under comparable average response lengths. On this nominal basis, the three methods operate at roughly comparable generation budgets, with \method{} adding approximately 4.8\% generated tokens over MCMC.

Table~\ref{tab:main_results} reports the accuracy comparison with likelihood-ranked BoN@10. \method{} achieves higher accuracy on all 15 reported model--benchmark pairs, with gains of 1.60--11.47 percentage points. On Qwen2.5-Math-7B/MATH500, it reaches 77.60\% versus 73.00\% for BoN@10 (+4.60 points) and on DeepSeek-1.5B/AIME2025, it reaches 26.67\% versus 16.67\% (+10.00 points). These comparisons favor targeted suffix repair over the evaluated candidate-selection baseline.

\subsection{Transfer across Tasks and Prompts}
\label{subsec:transfer}
\begin{table}[t]
\centering
\caption{\textbf{Transfer across tasks and prompts.} Accuracy (\%) on HumanEval and under 8-shot CoT math prompting. The latter uses Qwen2.5-Math-7B.}
\label{tab:main_prompt_task_transfer}
\small
\setlength{\tabcolsep}{4pt}
\begin{tabular}{lrrr}
\toprule
Model / benchmark & Temp. & MCMC & \method{} \\
\midrule
\multicolumn{4}{l}{\textit{HumanEval}} \\
Qwen2.5-Math-7B & 52.4 & 56.1 & 58.5 \\
Tülu-8B & 58.5 & 62.8 & 66.5 \\
\midrule
\multicolumn{4}{l}{\textit{8-shot CoT}} \\
MATH500 & 68.4 & 73.2 & 78.8 \\
GSM8K & 85.5 & 86.3 & 88.4 \\
AIME2025 & 7.8 & 10.0& 16.7 \\
\bottomrule
\end{tabular}
\end{table}

\paragraph{Code Generation.}
Table~\ref{tab:main_prompt_task_transfer} extends the evaluation to HumanEval~\cite{chen2021codex}, where \method{} gains 2.4 points over MCMC on Qwen2.5-Math-7B and 3.7 on Tülu-8B. Restricting Qwen2.5-Math-7B's boundary set to \texttt{.} and \texttt{\textbackslash n} retains 57.90\% accuracy, compared with 58.5\% for the full set and 56.1\% for MCMC (Appendix~\ref{app:boundary_robustness}). Most of the observed gain thus survives with two general-purpose boundary anchors. These gains support transfer beyond mathematics on two backbones; Appendix~\ref{app:additional_generalization} provides the full comparison.

\paragraph{Few-Shot Prompting.}
With standard 8-shot CoT prompts on Qwen2.5-Math-7B, Table~\ref{tab:main_prompt_task_transfer} shows gains over MCMC of 5.6 points on MATH500, 2.1 on GSM8K, and 6.7 on AIME2025. The gains therefore persist in this additional prompting setting and Appendix~\ref{app:fewshot_cot} reports all decoding baselines.

\section{Discussion}

\paragraph{From Entropy Diagnostics to Revision Sites.}
The repeated-generation study shows that entropy carries useful information about response correctness. Mean token entropy provides the strongest response-level discrimination in Table~\ref{tab:entropy_diagnostics}, while the token-wise entropy trajectory preserves the positional information needed for revision. \method{} uses an entropy peak to anchor a local search and selects the largest entropy increase within that window. This connects the entropy--correctness finding to an actionable revision policy.

\paragraph{Signal Choice and Intervention Strategy.}
The signal comparisons and ablations identify complementary ingredients of effective revision. Entropy achieves the highest accuracy in the main cross-benchmark signal sweep, and tracing improves over peak revision for all six tested signal configurations on MATH500. Boundary alignment further improves the entropy-based controller and remains effective with only periods and newlines. Together, these results suggest that useful refinement depends on both an informative uncertainty signal and a suitable rule for converting that signal into a suffix proposal.

\paragraph{When Revision Fails.}
Manual inspection of Qwen2.5-Math-7B on MATH500 finds confident errors without a salient entropy spike in 7.6\% of all problems and detected but unrecovered failures in 14.8\% (Appendix~\ref{app:residual_failure}). The latter account for most residual failures, highlighting regeneration quality as a remaining bottleneck.

\section{Conclusion}
\label{sec:conclusion}
In this paper, we present empirical evidence that token-level entropy is associated with reasoning errors. We found that incorrect responses have higher mean token entropy on 88.12\% of paired MATH500 questions, with an average within-question relative increase of 112\%. Based on this association between the during-generation signal and reasoning errors, we introduce \methodfull{}, a training-free controller that uses entropy-gradient backtracking to select revision points, aligns them with natural boundaries, and filters regenerated suffixes with an MH-style score. Across three model families and five benchmarks, \method{} outperforms all baselines, improving over random-proposal MCMC by 0.51--10.00 percentage points. Signal comparisons identify entropy as the strongest tested trigger across three mathematical benchmarks, and controlled ablations support the contributions of both gradient backtracking and boundary alignment. Together, these results show that intrinsic uncertainty can guide targeted trajectory repair and improve test-time reasoning.

\clearpage
\section*{Limitations}

\label{sec:limitations}

\paragraph{Computational Cost.} 
The cost study in Section~\ref{subsec:cost} reports generation-token counts, which exclude entropy-only forward passes and do not establish normalized FLOPs or wall-clock cost. Entropy computation and discarded proposals add overhead, and efficiency comparisons depend on implementation and batching.

\paragraph{Scope and Generalization.}
The task and model coverage in Section~\ref{subsec:transfer} is limited. Supplementary HumanEval results cover only two backbones. Further tasks, backbones, and repeated runs are needed to assess broader transfer.

\paragraph{Signal Validity and Statistical Coverage.}
The benchmarks provide no token-level first-error annotations, so the evidence concerns intervention utility rather than localization accuracy. AIME2025 has only 30 questions, limiting the precision of accuracy comparisons. Finally, $7.6\%$ of the inspected MATH500 examples are confidently wrong without a salient entropy spike (Appendix~\ref{app:residual_failure}), setting a direct limit on uncertainty-triggered repair.

\section*{Ethical Considerations}

REWIND can still produce incorrect answers despite improved benchmark accuracy. Its iterative revision requires additional inference computation, which may increase energy consumption.

\bibliography{example_paper}

\clearpage
\appendix

\section*{Appendix Overview}

The appendix provides supporting material for the main empirical and methodological claims. Appendix~\ref{app:appendix_theory} gives a theoretical account of entropy-gradient localization, while Appendix~\ref{app:theoretical_justification} explains the sequence-level motivation behind the accept/reject refinement loop. Appendix~\ref{app:qwen72b_compare} compares targeted refinement with a larger single-pass model. Appendices~\ref{app:boundary_robustness}--\ref{app:supplementary_robustness} report robustness checks for boundary alignment, trigger signals, and supplementary controller variants. Appendices~\ref{app:intervention_analysis}--\ref{app:residual_failure} analyze repairs, regressions, proposal acceptance, intervention-site coverage, and residual failures. Finally, Appendix~\ref{app:fewshot_cot}, Appendix~\ref{app:additional_generalization}, and Appendix~\ref{app:multiseed_stability} provide additional prompting, generalization, and stability results, followed by qualitative case studies in Appendix~\ref{app:case_studies} and implementation details and pseudocode in Appendix~\ref{app:algorithm}.

\section{Entropy-Gradient Intuition}
\label{app:appendix_theory}

We analyze entropy-gradient localization through \emph{semantic fragmentation}: the model becomes uncertain between conflicting interpretations, increasing cross-interpretation disagreement. We derive sufficient conditions under which the resulting entropy-gradient spike identifies the local onset of fragmentation.

\subsection{Setup and Notation}

Let $s_t := x_{<t}$ denote the prefix at step $t$. The next-token entropy is defined as
\begin{equation}
    H_t := H(X_t \mid s_t) = -\sum_{v} p(v \mid s_t) \log p(v \mid s_t),
\end{equation}
and the entropy gradient is $\Delta H_t := H_t - H_{t-1}$.

\subsection{Main Result}

\begin{theorem}[Entropy Gradient Localizes Semantic Fragmentation]
\label{thm:main_appendix}
Assume the model's predictive distribution admits a latent decomposition
\begin{equation}
    p(x \mid s_t) = \sum_{z \in \mathcal{Z}} p(z \mid s_t) \cdot p(x \mid s_t, z),
\end{equation}
where $z$ represents implicit semantic interpretations. Define the within-world entropy $A_t := \mathbb{E}_{Z}[ H(X_t \mid s_t, Z)]$ and the cross-world disagreement $B_t := I(X_t; Z \mid s_t)$.

\textbf{(Decomposition)} The entropy decomposes as
\begin{equation}
    H_t = A_t + B_t, \quad \text{hence} \quad \Delta H_t = \Delta A_t + \Delta B_t.
\end{equation}

\textbf{(Localization)} If $t_0$ lies inside the searched neighborhood $\mathcal{W}(t^*)$ and semantic fragmentation occurs at $t_0$, causing a mutual information jump $\Delta B_{t_0} \geq \Gamma$, and the within-world entropy is locally smooth with $|\Delta A_t| \leq L$ over the searched neighborhood $\mathcal{W}(t^*)$, then
\begin{equation}
    \Delta H_{t_0} \geq \Gamma - L.
\end{equation}

\textbf{(Identification)} If additionally $\Delta B_t \leq \gamma$ for all competing positions $t \in \mathcal{W}(t^*)$, $t \neq t_0$ (no comparable MI jumps in the local search window) and $\Gamma - \gamma > 2L$, then
\begin{equation}
    t_0 = \operatorname*{arg\,max}_{t \in \mathcal{W}(t^*)} \Delta H_t.
\end{equation}
\end{theorem}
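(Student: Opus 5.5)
The plan is to prove the three parts in order; each part feeds into the next.

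\textbf{Decomposition.} Treat $X_t$ and $Z$ as jointly distributed given $s_t$, with joint law $p(z\mid s_t)\,p(x\mid s_t,z)$. By the stated latent decomposition, the marginal of $X_t$ under this joint law is exactly the model's predictive distribution, so $H(X_t\mid s_t)=H_t$. The standard identity $I(X_t;Z\mid s_t)=H(X_t\mid s_t)-H(X_t\mid Z,s_t)$ then gives
\[
H_t = H(X_t\mid Z,s_t) + I(X_t;Z\mid s_t).
\]
Since $H(X_t\mid Z,s_t)=\sum_z p(z\mid s_t)H(X_t\mid s_t,z)=A_t$, this is $H_t=A_t+B_t$. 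The identity holds at every step, so subtracting consecutive steps gives $\Delta H_t=\Delta A_t+\Delta B_t$. One point to flag is that the latent family $\mathcal{Z}$ and the posterior $p(z\mid s_t)$ may vary with $t$. The decomposition does not care, because it holds pointwise in $t$.

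\textbf{Localization.} This is a one-line triangle bound. At $t_0\in\mathcal{W}(t^*)$ we have
\[
\Delta H_{t_0}=\Delta A_{t_0}+\Delta B_{t_0}\ge -L+\Gamma,
\]
using $\Delta A_{t_0}\ge -|\Delta A_{t_0}|\ge -L$.

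\textbf{Identification.} For any competitor $t\in\mathcal{W}(t^*)$ with $t\ne t_0$, we get $\Delta H_t\le |\Delta A_t|+\Delta B_t\le L+\gamma$. Combining this with the localization bound gives
\[
\Delta H_{t_0}-\Delta H_t\ge (\Gamma-L)-(L+\gamma)=\Gamma-\gamma-2L>0.
\]
So $t_0$ is the strict, and hence unique, maximizer of $\Delta H$ over the window. That is exactly the argmax in Phase~2 when $\mathcal{W}(t^*)$ is taken to be the index set of Equation~\ref{eq:rewind_backtrack}.

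\textbf{Main obstacle.} The mathematics is routine; the care goes into bookkeeping. First, $\mathcal{W}(t^*)$ must consist only of positions where $\Delta H_t$ is defined, i.e.\ $t>L_{\text{context}}+1$. Second, the edge case $t_{\text{start}}=t^*$ with no difference computed must be excluded or handled separately. Third, the Identification hypotheses must be read as holding at every point of the same window. Beyond stating these conventions, no further estimates are needed.
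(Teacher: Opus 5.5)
Your proposal is correct and follows the paper's proof essentially step for step: the conditional mutual-information identity gives the decomposition, the bound $\Delta A_{t_0}\ge -L$ gives localization, and the chain $\Delta H_{t_0}\ge\Gamma-L>L+\gamma\ge\Delta H_t$ gives identification. Your bookkeeping remarks (restricting $\mathcal{W}(t^*)$ to positions where $\Delta H_t$ is defined, and the first-token edge case) are sensible additions that the paper leaves implicit.
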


\begin{proof}
\textbf{Part 1} follows from the definition of conditional mutual information. Recall that $I(X_t; Z \mid s_t) = H(X_t \mid s_t) - H(X_t \mid s_t, Z)$. Rearranging gives $H(X_t \mid s_t) = H(X_t \mid s_t, Z) + I(X_t; Z \mid s_t)$. Since $H(X_t \mid s_t, Z) = \sum_z p(z \mid s_t) H(X_t \mid s_t, z) = \mathbb{E}_Z[H(X_t \mid s_t, Z)] = A_t$ and $I(X_t; Z \mid s_t) = B_t$, we obtain $H_t = A_t + B_t$.

\textbf{Part 2}: At $t_0$, we have $\Delta H_{t_0} = \Delta A_{t_0} + \Delta B_{t_0}$. By the smoothness assumption, $\Delta A_{t_0} \geq -L$. By the fragmentation assumption, $\Delta B_{t_0} \geq \Gamma$. Thus $\Delta H_{t_0} \geq -L + \Gamma = \Gamma - L$.

\textbf{Part 3}: For any competing position $t \in \mathcal{W}(t^*)$, $t \neq t_0$, we have $\Delta A_t \leq L$ (local smoothness) and $\Delta B_t \leq \gamma$ (no comparable MI jumps). Thus $\Delta H_t = \Delta A_t + \Delta B_t \leq L + \gamma$. The condition $\Gamma - L > L + \gamma$ (equivalently, $\Gamma - \gamma > 2L$) ensures $\Delta H_{t_0} \geq \Gamma - L > L + \gamma \geq \Delta H_t$ for all such $t$.
\end{proof}

\begin{remark}[Interpretation of Assumptions]
The theorem combines a latent mixture model, locally smooth within-interpretation entropy, and an isolated mutual-information jump. The mixture model aligns with Bayesian interpretations of in-context learning in which latent concepts govern generation~\citep{xie2021explanation}. Smoothness is required within the controller's search window. The signal-strength condition $\Gamma-\gamma>2L$ ensures that the fragmentation event has a larger entropy gradient than competing positions in that window.
\end{remark}

\subsection{Why an Earlier Cut May Help}
Backtracking from $t^*$ to $t_0<t^*$ removes the intervening tokens and lets the model regenerate from before the identified distributional shift. When those tokens encode a harmful premise, the earlier cut provides an opportunity to revise it. The benefit depends on the content removed; proposal-policy ablations and paired final-answer outcomes evaluate this choice empirically.

\subsection{When Does Fragmentation Produce a Detectable Signal?}

\begin{lemma}[MI Lower Bound]
\label{lem:mi}
Suppose at step $t$, two semantic worlds $z_1, z_2$ have total mass $m=p(z_1 \mid s_t)+p(z_2 \mid s_t)$, normalized conditional weights $\tilde{\pi}_1,\tilde{\pi}_2 \geq \eta$ within this two-world subset, and $\mathrm{TV}(p(\cdot \mid s_t, z_1), p(\cdot \mid s_t, z_2)) \geq \delta$ (they disagree on predictions). Then
\begin{equation}
    I(X_t; Z \mid s_t) \geq 2m\,\eta(1-\eta)\delta^2.
\end{equation}
\end{lemma}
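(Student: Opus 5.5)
The plan is to write the conditional mutual information as a weighted sum of KL divergences, keep only the two worlds $z_1,z_2$, and lower-bound the resulting two-component quantity with Pinsker's inequality. Throughout, write $p_z := p(\cdot\mid s_t,z)$ and $\bar p := p(\cdot\mid s_t)=\sum_z p(z\mid s_t)\,p_z$. The starting identity is
\begin{equation}
I(X_t;Z\mid s_t)=\sum_{z\in\mathcal Z} p(z\mid s_t)\,\mathrm{KL}(p_z\,\|\,\bar p).
\end{equation}
Every term is nonnegative, so discarding all worlds other than $z_1,z_2$ gives
\begin{equation}
I(X_t;Z\mid s_t)\ \ge\ m\bigl[\tilde\pi_1\mathrm{KL}(p_{z_1}\|\bar p)+\tilde\pi_2\mathrm{KL}(p_{z_2}\|\bar p)\bigr].
\end{equation}

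The main obstacle is that the reference distribution $\bar p$ is the \emph{full} mixture, not the two-world mixture $\tilde p:=\tilde\pi_1 p_{z_1}+\tilde\pi_2 p_{z_2}$. To handle this, I will use the compensation identity: for any distribution $q$,
\begin{equation}
\sum_i\tilde\pi_i\mathrm{KL}(p_{z_i}\|q)=\sum_i\tilde\pi_i\mathrm{KL}(p_{z_i}\|\tilde p)+\mathrm{KL}(\tilde p\|q).
\end{equation}
This follows by expanding the logarithms, since $\sum_i\tilde\pi_i p_{z_i}=\tilde p$. Because $\mathrm{KL}(\tilde p\|q)\ge 0$, setting $q=\bar p$ replaces $\bar p$ by $\tilde p$ at the cost of an inequality in the right direction. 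The bracket is therefore at least the generalized Jensen--Shannon divergence $\sum_i\tilde\pi_i\mathrm{KL}(p_{z_i}\|\tilde p)$.

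Next, Pinsker's inequality gives $\mathrm{KL}(p_{z_i}\|\tilde p)\ge 2\,\mathrm{TV}(p_{z_i},\tilde p)^2$. The distances to the two-world mixture are explicit:
\begin{equation}
p_{z_1}-\tilde p=\tilde\pi_2\,(p_{z_1}-p_{z_2}),\qquad p_{z_2}-\tilde p=\tilde\pi_1\,(p_{z_2}-p_{z_1}).
\end{equation}
Hence $\mathrm{TV}(p_{z_1},\tilde p)=\tilde\pi_2\,\mathrm{TV}(p_{z_1},p_{z_2})\ge\tilde\pi_2\delta$ and $\mathrm{TV}(p_{z_2},\tilde p)\ge\tilde\pi_1\delta$. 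Summing the two Pinsker bounds with weights $\tilde\pi_1,\tilde\pi_2$ gives
\begin{equation}
\sum_i\tilde\pi_i\mathrm{KL}(p_{z_i}\|\tilde p)\ \ge\ 2\delta^2\bigl(\tilde\pi_1\tilde\pi_2^2+\tilde\pi_2\tilde\pi_1^2\bigr)=2\delta^2\,\tilde\pi_1\tilde\pi_2,
\end{equation}
where the last step uses $\tilde\pi_1+\tilde\pi_2=1$.

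Finally, $\tilde\pi_1,\tilde\pi_2\ge\eta$ together with $\tilde\pi_1+\tilde\pi_2=1$ forces $\tilde\pi_1\in[\eta,1-\eta]$; this implicitly requires $\eta\le 1/2$, which I will note in the proof. The function $x(1-x)$ is concave, so on this interval it is minimized at an endpoint, giving $\tilde\pi_1\tilde\pi_2\ge\eta(1-\eta)$. Chaining the three displays yields $I(X_t;Z\mid s_t)\ge 2m\,\eta(1-\eta)\,\delta^2$, which is the claimed bound.
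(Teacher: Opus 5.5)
Your proposal is correct and follows the paper's route. You restrict to the two worlds, use $\mathrm{TV}(p_{z_1},\tilde p)=\tilde\pi_2\,\mathrm{TV}(p_{z_1},p_{z_2})$ with Pinsker to get $2\tilde\pi_1\tilde\pi_2\delta^2$, scale by $m$, and bound $\tilde\pi_1\tilde\pi_2\ge\eta(1-\eta)$; your compensation identity (dropping $\mathrm{KL}(\tilde p\,\|\,\bar p)\ge 0$) simply makes explicit the paper's ``chain rule and non-negativity'' step for passing from the full mixture to the two-world conditional mixture.
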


\begin{proof}[Proof Sketch]
Let $p_i = p(\cdot \mid s_t, z_i)$ denote the conditional distributions. We first condition on the event $Z \in \{z_1,z_2\}$, whose probability is $m$, and write the normalized weights as $\tilde{\pi}_i=p(z_i\mid s_t)/m$. For the two-component conditional mixture $\bar{p}_{12}=\tilde{\pi}_1 p_1+\tilde{\pi}_2 p_2$, the total variation between each component and the conditional mixture satisfies $\mathrm{TV}(p_1,\bar{p}_{12})=\tilde{\pi}_2 \mathrm{TV}(p_1,p_2)\geq \tilde{\pi}_2\delta$ and similarly for $p_2$.

Applying Pinsker's inequality $D_{\mathrm{KL}}(p \| q) \geq 2\,\mathrm{TV}(p,q)^2$ within this conditional two-world mixture yields a conditional mutual-information lower bound of $2\tilde{\pi}_1\tilde{\pi}_2\delta^2$. Multiplying by the subset probability $m$ gives the stated contribution to the full mutual information by the chain rule and non-negativity of the remaining components.

Since $\tilde{\pi}_1,\tilde{\pi}_2 \geq \eta$ and $\tilde{\pi}_1+\tilde{\pi}_2=1$, we have $\tilde{\pi}_1\tilde{\pi}_2 \geq \eta(1-\eta)$, yielding $I(X_t; Z \mid s_t) \geq 2m\eta(1-\eta)\delta^2$.
\end{proof}

\begin{remark}
The lemma links disagreement between plausible interpretations to a larger mutual-information component. Temporal localization follows when this component exhibits an isolated jump satisfying Theorem~\ref{thm:main_appendix}. The quadratic dependence on $\delta$ captures the strength of the prediction disagreement.
\end{remark}

\subsection{Scope of the Analysis}

The analysis describes errors that manifest as localized semantic fragmentation. \emph{Confident errors} can remain low-entropy because the model commits to a wrong but internally coherent interpretation. \emph{Gradual drift} spreads uncertainty changes across steps, weakening the localized-jump signal.

The condition $\Gamma-\gamma>2L$ characterizes when the fragmentation signal dominates background changes. Large uncertainty shifts during legitimate reasoning transitions can compete with that signal and affect the selected revision site.

The latent decomposition serves as an explanatory model. Our experiments use observable entropy traces and assess revision utility through final-answer accuracy.

\subsection{Sequence-Level Score Motivation}
\label{app:theoretical_justification}
Temperature $1/\alpha$ normalizes each next-token distribution separately. Because these normalizers depend on the prefix, their product varies across trajectories. Sequence-level Power Sampling instead uses the global normalizer in $p(\mathbf{x})^\alpha/Z(\alpha)$, motivating suffix-level scoring. In exact MH, this normalizer cancels from the forward/reverse ratio. \method{} compares suffix scores at the selected site using the MH-style filter in Equation~\ref{eq:acceptance_ratio}; state-dependent site-selection probabilities are omitted. Final-answer accuracy measures the resulting refinement performance.

\section{Comparison with Qwen2.5-Math-72B}
\label{app:qwen72b_compare}

We compare iterative refinement with Qwen2.5-Math-7B against single-pass sampling from Qwen2.5-Math-72B at $T=0.25$. The two settings differ in both model size and inference budget, evaluating a trade-off between model scale and iterative computation. Table~\ref{tab:qwen72b_compare} reports accuracy across all five benchmarks.

\begin{table*}[t]
\centering
\caption{\textbf{Comparison with Qwen2.5-Math-72B.} Accuracy (\%) is reported across five benchmarks. The 72B baseline is single-pass sampling with $T=0.25$, while \method{} uses the Qwen2.5-Math-7B backbone with iterative refinement.}
\label{tab:qwen72b_compare}
\vspace{0.2cm}
\begin{appendixtableblock}[\textwidth]
\begin{tabular}{lccccc}
\toprule
\textbf{Method} & \textbf{GSM8K} & \textbf{MATH500} & \textbf{GPQA} & \textbf{AIME2025} & \textbf{BBH} \\
\midrule
Qwen2.5-Math-72B, single-pass ($T=0.25$) & $\textbf{94.01}$ & $75.40$ & $33.33$ & $6.67$ & $67.60$ \\
\rowcolor{gray!20}\method{} (Qwen2.5-Math-7B) & $84.31$ & $\textbf{77.60}$ & $\textbf{35.86}$ & $\textbf{16.67}$ & $\textbf{77.20}$ \\
\bottomrule
\end{tabular}
\end{appendixtableblock}
\end{table*}

\method{} with the 7B backbone outperforms the single-pass 72B baseline on MATH500, GPQA, AIME2025, and BBH; the 72B model achieves higher accuracy on GSM8K. These results demonstrate the utility of spending additional inference computation on targeted revision in these settings.

\section{Boundary Alignment Robustness}
\label{app:boundary_robustness}

Boundary alignment uses a lightweight decoded-token delimiter rule. We test its sensitivity by restricting the delimiter set to two general-purpose anchors, \texttt{\textbackslash n} and \texttt{.}. Table~\ref{tab:boundary_robustness} shows that this restricted variant preserves most of the full method's performance on both MATH500 and HumanEval. The results support coarse boundary alignment and show that the gains remain strong with a small delimiter set.

\begin{table}[t]
\centering
\caption{\textbf{Robustness to restricted delimiters.} The restricted setting keeps only \texttt{\textbackslash n} and \texttt{.}; full uses \texttt{\textbackslash n}, \texttt{.}, \texttt{,}, \texttt{\$}, and \texttt{)}.}
\label{tab:boundary_robustness}
\vspace{0.2cm}
\begin{appendixtableblock}
\begin{tabular}{lccc}
\toprule
\textbf{Benchmark} & \textbf{MCMC} & \textbf{Restricted} & \textbf{Full} \\
\midrule
MATH500 & $72.60$ & $76.80$ & $\textbf{77.60}$ \\
HumanEval & $56.10$ & $57.90$ & $\textbf{58.50}$ \\
\bottomrule
\end{tabular}
\end{appendixtableblock}
\end{table}

\section{Signal Robustness}
\label{app:signal_robustness}
\label{subsec:signal_robustness}
This appendix supplements the signal analysis in Section~\ref{subsec:mechanism_analysis}. We first define the implemented revision scores, then report the full peak-versus-tracing comparison in Table~\ref{tab:metric_strategy}, with the MCMC reference. The cross-benchmark signal sweep is reported in main-text Table~\ref{tab:metric_robustness}.

\paragraph{Signal Definitions.}
Let $\ell_t$ be the model logits predicting generated token $x_t$, $p_t=\operatorname{softmax}(\ell_t)$ the full-vocabulary distribution before sampling transformations, and $V=|\mathcal{V}|$. All logarithms are natural. Write $p_{t,(1)}\geq p_{t,(2)}$ for its two largest probabilities. The following scores $M_t$ are passed to the common peak-selection and gradient-tracing rules:
\begin{center}
\small
\setlength{\tabcolsep}{4pt}
\renewcommand{\arraystretch}{1.2}
\begin{tabular}{@{}ll@{}}
\toprule
Statistic & Revision score $M_t$ \\
\midrule
Shannon entropy & $H(p_t)=-\sum_v p_t(v)\log p_t(v)$ \\
Temporal KL & $D_{\mathrm{KL}}(p_t\Vert p_{t-1})$ \\
Max probability & $1-p_{t,(1)}$ \\
Probability gap & $1-(p_{t,(1)}-p_{t,(2)})$ \\
Top-$K$ similarity & $1-C_{t,K}$ \\
Skewness & $-V^{-1}\sum_v z_t(v)^3$ \\
Kurtosis (excess) & $-[V^{-1}\sum_v z_t(v)^4-3]$ \\
\bottomrule
\end{tabular}
\end{center}
Temporal KL is explicitly
\begin{equation}
M_t=\sum_{v\in\mathcal{V}}p_t(v)
\log\frac{p_t(v)}{p_{t-1}(v)},
\end{equation}
with score zero at the first generated position. Maximum probability and probability gap are reversed so that lower confidence gives a higher revision score.

For Top-$K$ similarity, let $v_{t,1},\ldots,v_{t,K}$ be the highest-probability tokens and $e_v$ the fixed model's input embedding of token $v$. We use the unweighted mean pairwise cosine similarity:
\begin{equation}
C_{t,K}=\frac{2}{K(K-1)}
\sum_{i<j}\cos(e_{v_{t,i}},e_{v_{t,j}}).
\end{equation}
Thus lower embedding similarity gives a higher score. We evaluate $K\in\{3,5\}$.

Skewness and excess kurtosis are calculated across the $V$ vocabulary probability values. Following the implementation's sample-standard-deviation convention, define
\begin{equation}
\begin{aligned}
s_t^2&=\frac{1}{V-1}\sum_{v\in\mathcal{V}}
\bigl(p_t(v)-1/V\bigr)^2,\\
z_t(v)&=\frac{p_t(v)-1/V}{s_t+10^{-10}}.
\end{aligned}
\end{equation}
Both statistics are negated before site selection. For each score, peak revision maximizes $M_t$, while tracing maximizes the signed difference $M_t-M_{t-1}$ within the preceding peak window. The suffix-generation and acceptance procedure is shared across signal variants.

\paragraph{Varentropy.}
The varentropy configuration in Table~\ref{tab:supp_trigger_metrics} uses the temperature-scaled full-vocabulary distribution
\begin{equation}
\widetilde p_t=\operatorname{softmax}(\ell_t/T_s),
\qquad T_s=0.25.
\end{equation}
Its score is the variance of token surprisal under $\widetilde p_t$:
\begin{equation}
M_t=\sum_v\widetilde p_t(v)[\log\widetilde p_t(v)]^2
-H(\widetilde p_t)^2.
\end{equation}
The implementation takes the absolute value of this difference to obtain its nonnegative score. The other scores defined above use the unscaled distribution $p_t$.

\paragraph{Robustness of the Gradient Tracing Strategy.}
\label{subsubsec:strategy_universality}
To test whether the gradient-versus-peak intervention pattern is specific to Shannon entropy, we decouple the \textit{localization strategy} from the \textit{metric choice}. On MATH500, we compare two revision strategies across multiple distributional statistics (KL divergence, Top-$K$ similarity, skewness, and kurtosis). Peak selection revises at the position with maximum metric value ($t^* = \arg\max M_t$). In contrast, \textbf{Targeted Gradient Tracing (Tracing)} revises at the maximum metric gradient within a preceding window ($t_{\text{start}} = \arg\max \Delta M_t$), treating the sharp distributional shift as a proxy for an earlier useful intervention point.

\begin{table}[t]
\centering
\caption{\textbf{Peak vs. gradient-based intervention localization.} Accuracy (\%) on MATH500 under naive peak revision and gradient tracing.}
\label{tab:metric_strategy}
\vspace{0.2cm}
\resizebox{0.95\linewidth}{!}{%
\begin{tabular}{lcc@{\hspace{5pt}}c}
\toprule
\textbf{Metric} & \textbf{Naive} & \textbf{Tracing} & \textbf{$\Delta$ Gain} \\
\midrule
\textit{MCMC} & $72.6$ & - & - \\
\midrule
\rowcolor{gray!20}\textbf{Shannon Entropy} & $74.2$ & $\textbf{77.6}$ & $\textbf{+3.4}$ \\
KL Divergence (Temporal) & $71.8$ & $75.0$ & $+3.2$ \\
Top-K Similarity ($K=5$) & $73.4$ & $74.2$ & $+0.8$ \\
Top-K Similarity ($K=3$) & $72.6$ & $73.4$ & $+0.8$ \\
Skewness & $73.2$ & $73.4$ & $+0.2$ \\
Kurtosis & $72.0$ & $73.0$ & $+1.0$ \\
\bottomrule
\end{tabular}%
}
\end{table}

On Qwen2.5-Math-7B/MATH500, gradient tracing improves accuracy for every listed configuration in Table~\ref{tab:metric_strategy}. The largest gain is 3.4 points for entropy, followed by 3.2 for temporal KL. Kurtosis (+1.0) and skewness (+0.2) show smaller gains. These comparisons support revising at an earlier distributional shift within the peak window.

Entropy achieves the highest accuracy on all three benchmarks in Table~\ref{tab:metric_robustness}. Both Top-$K$ configurations gain 0.8 points from tracing on MATH500 in Table~\ref{tab:metric_strategy}, while entropy remains stronger in the cross-benchmark comparison.

\section{Supplementary Controller Robustness}
\label{app:supplementary_robustness}

We evaluate alternative trigger metrics and boundary/window settings over three random seeds.

Maximum probability and probability gap use the reversed scores defined in Appendix~\ref{app:signal_robustness}. Varentropy uses the temperature-scaled distribution with $T_s=0.25$, as specified there.

\begin{table*}[t]
\centering
\caption{\textbf{Supplementary trigger-metric checks.} Max probability, probability gap, and varentropy are added as common confidence-style baselines under the same MCMC revision budget. Entropy values are repeated from Table~\ref{tab:metric_robustness} for reference.}
\label{tab:supp_trigger_metrics}
\vspace{0.2cm}
\begin{appendixtableblock}[\textwidth]
\begin{tabular}{lcc}
\toprule
\textbf{Trigger metric ($M_t$)} & \textbf{MATH500} & \textbf{AIME2025} \\
\midrule
\textit{MCMC} & $72.60$ & $10.00$ \\
\midrule
\rowcolor{gray!20}\textbf{Shannon Entropy (main)} & $\textbf{77.60}$ & $\textbf{16.67}$ \\
Max Probability & $70.60$ & $\textbf{16.67}$ \\
Probability Gap & $71.80$ & $13.33$ \\
Varentropy & $70.60$ & $13.33$ \\
\bottomrule
\end{tabular}
\end{appendixtableblock}
\end{table*}

In Table~\ref{tab:supp_trigger_metrics}, maximum probability matches entropy on AIME2025 and falls below MCMC on MATH500. Probability gap and varentropy also trail entropy on MATH500.

\begin{table*}[t]
\centering
\caption{\textbf{Supplementary boundary and window checks.} Expanded delimiters add additional punctuation and closing brackets to the default set. The adaptive-window variant uses the full delimiter set and allows a larger look-back window capped at 20 tokens.}
\label{tab:supp_boundary_window}
\vspace{0.2cm}
\begin{appendixtableblock}[\textwidth]
\begin{tabular}{lcc}
\toprule
\textbf{Setting} & \textbf{MATH500} & \textbf{AIME2025} \\
\midrule
\rowcolor{gray!20}\textbf{Full delimiters, fixed $w=5$ (main)} & $\textbf{77.60}$ & $\textbf{16.67}$ \\
Expanded delimiters, fixed $w=5$ & $74.00$ & $13.33$ \\
Full delimiters, adaptive window (max 20) & $73.60$ & $10.00$ \\
\bottomrule
\end{tabular}
\end{appendixtableblock}
\end{table*}

The default delimiter set and fixed $w=5$ window achieve the highest accuracy in Table~\ref{tab:supp_boundary_window}. Both expanded delimiters and the adaptive window reduce accuracy relative to this setting, supporting a small delimiter set and a fixed local search window.

\section{Answer-Level Intervention Analysis}
\label{app:intervention_analysis}

\begin{table}[t]
\centering
\caption{\textbf{Answer-level repair and regression analysis on MATH500.} Mean counts across the same three runs summarized in Table~\ref{tab:main_results}. Repair/regression is the ratio of the mean counts.}
\label{tab:repair_regression}
\vspace{0.2cm}
\begin{appendixtableblock}
\begin{tabular}{lccc}
\toprule
\textbf{Setting} & \textbf{\shortstack{Mean\\repairs}} & \textbf{\shortstack{Mean\\regressions}} & \textbf{Repair / regression} \\
\midrule
Qwen2.5-Math-7B on MATH500 & $46$ & $21$ & $2.19\times$ \\
\bottomrule
\end{tabular}
\end{appendixtableblock}
\end{table}

Table~\ref{tab:repair_regression} compares final answers under standard MCMC and \method{} over the same three runs used for the main results in Table~\ref{tab:main_results}. Within each run, we count repairs and regressions over the 500 MATH500 questions, then average these counts across runs. Relative to MCMC, \method{} repairs 46 previously incorrect answers and regresses on 21 previously correct answers on average per run. The mean net gain is therefore 25 correct answers per 500 questions, corresponding to the 5.00-percentage-point difference in mean accuracy reported in the main table.

\begin{table}[t]
\centering
\caption{\textbf{Proposal acceptance ratios.} Percentage of proposals accepted under random MCMC and \method{} across three benchmarks.}
\label{tab:acceptance_ratio_analysis}
\vspace{0.2cm}
\begin{appendixtableblock}
\begin{tabular}{lccc}
\toprule
\textbf{Benchmark} & \textbf{MCMC} & \textbf{\method{}} & \textbf{$\Delta$} \\
\midrule
AIME2025 & $44.4$ & $39.4$ & $-5.0$ \\
MATH500 & $57.3$ & $54.1$ & $-3.2$ \\
GSM8K & $65.6$ & $63.4$ & $-2.2$ \\
\bottomrule
\end{tabular}
\end{appendixtableblock}
\end{table}

\method{} has lower proposal acceptance on all three benchmarks in Table~\ref{tab:acceptance_ratio_analysis}. On MATH500, the complete controller accepts 54.1\% of proposals, compared with 57.3\% for random MCMC, alongside the positive net answer gain in Table~\ref{tab:repair_regression}.

\paragraph{MATH500 observation protocol.}
The repeated-generation observations in Section~\ref{subsec:entropy_observations} use temperature $1.0$ and base seed 42, with a 3,072-token generation cap and a 4,096-token context limit. Of the 10,000 responses, 248 reach their generation cap.

\section{Intervention-Site Coverage}
\label{app:multi_error_coverage}

Each guided proposal selects one aligned cut point and regenerates its suffix. We count the distinct aligned positions visited across refinement rounds under the blockwise protocol with $K=10$ random MCMC steps per block. On MATH500, the controller visits an average of \textbf{3.18} distinct sequence positions per problem (Table~\ref{tab:multi_error_coverage}).

\begin{table}[t]
\centering
\caption{\textbf{Iterative coverage across refinement rounds on MATH500.} Mean number of distinct aligned sequence positions visited per problem during refinement.}
\label{tab:multi_error_coverage}
\vspace{0.2cm}
\begin{appendixtableblock}
\begin{tabular}{lcc}
\toprule
\textbf{Setting} & \shortstack{\textbf{Random MCMC}\\\textbf{steps / block}} & \textbf{Distinct positions / problem} \\
\midrule
Qwen2.5-Math-7B on MATH500 & $10$ & $3.18$ \\
\bottomrule
\end{tabular}
\end{appendixtableblock}
\end{table}

This statistic summarizes the controller's exploration of cut points across refinement rounds.

\section{Residual Failure Analysis}
\label{app:residual_failure}

We manually classify the remaining MATH500 failures using their entropy traces and regeneration outcomes. Table~\ref{tab:failure_taxonomy} summarizes two categories: (i) confident errors, where the model is wrong without a salient entropy spike, and (ii) detected-but-unrecovered cases, where \method{} identifies instability but the regenerated suffix still does not reach a correct answer.

\begin{table}[t]
\centering
\caption{\textbf{Failure taxonomy on MATH500 (manual inspection).} Confident errors are a minority of all evaluated problems; most residual failures occur after the entropy spike is detected but regeneration still does not recover a correct continuation.}
\label{tab:failure_taxonomy}
\vspace{0.2cm}
\begin{appendixtableblock}
\begin{tabular}{lcc}
\toprule
\textbf{Failure type} & \textbf{Share of all problems} & \textbf{Share of failures} \\
\midrule
Confident errors & $7.6\%$ & $33.9\%$ \\
Detected but unrecovered & $14.8\%$ & $66.1\%$ \\
\bottomrule
\end{tabular}
\end{appendixtableblock}
\end{table}

Detected-but-unrecovered cases account for 66.1\% of residual failures. This distribution motivates stronger suffix proposals for detected instability and complementary signals for confident errors.

\section{Standard Few-Shot CoT Evaluation}
\label{app:fewshot_cot}

Our main experiments use a single-output setting to isolate the effect of the inference-time controller. To test whether the gains persist under common prompting protocols, we additionally evaluate Qwen2.5-Math-7B with standard 8-shot CoT prompting on the math benchmarks. Table~\ref{tab:fewshot_cot} shows that \method{} continues to improve over vanilla decoding, temperature sampling, and standard MCMC under this setting.

\begin{table*}[t]
\centering
\caption{\textbf{Performance under standard 8-shot CoT prompting.} Accuracy (\%) is reported for Qwen2.5-Math-7B. \method{} maintains consistent gains under few-shot CoT prompts.}
\label{tab:fewshot_cot}
\vspace{0.2cm}
\begin{appendixtableblock}[\textwidth]
\begin{tabular}{lcccc}
\toprule
\textbf{Benchmark} & \textbf{Vanilla} & \textbf{Temperature} & \textbf{Standard MCMC} & \textbf{\method{}} \\
\midrule
AIME2025 & $3.3$ & $7.8$ & $10.0$ & $\textbf{16.7}$ \\
MATH500 & $51.2$ & $68.4$ & $73.2$ & $\textbf{78.8}$ \\
GSM8K & $74.9$ & $85.5$ & $86.3$ & $\textbf{88.4}$ \\
\bottomrule
\end{tabular}
\end{appendixtableblock}
\end{table*}

\section{Additional Generalization Results}
\label{app:additional_generalization}

Table~\ref{tab:human_eval_generalization} reports supplementary HumanEval~\cite{chen2021codex} results for Qwen2.5-Math-7B and Tülu-8B. These results show gains beyond mathematics on both backbones.

\begin{table*}[t]
\centering
\caption{\textbf{Supplementary HumanEval results.} \method{} also improves code-generation performance on two backbones.}
\label{tab:human_eval_generalization}
\vspace{0.2cm}
\begin{appendixtableblock}[\textwidth]
\begin{tabular}{lcccc}
\toprule
\textbf{Model} & \textbf{Vanilla} & \textbf{Temperature} & \textbf{MCMC} & \textbf{\method{}} \\
\midrule
Qwen2.5-Math-7B & $31.1$ & $52.4$ & $56.1$ & $\textbf{58.5}$ \\
Tülu-8B & $51.2$ & $58.5$ & $62.8$ & $\textbf{66.5}$ \\
\bottomrule
\end{tabular}
\end{appendixtableblock}
\end{table*}

\section{Multi-Seed Stability}
\label{app:multiseed_stability}

Table~\ref{tab:multiseed_stability} expands the Qwen2.5-Math-7B results in Table~\ref{tab:main_results} for MATH500 and AIME2025 using the same three runs. We report the arithmetic mean and sample standard deviation of run-level accuracies, with the latter computed using denominator $R-1$ for $R=3$ runs.

\begin{table}[t]
\centering
\caption{\textbf{Multi-seed stability on Qwen2.5-Math-7B.} Accuracy mean $\pm$ sample standard deviation for the same three runs summarized in Table~\ref{tab:main_results}.}
\label{tab:multiseed_stability}
\vspace{0.2cm}
\begin{appendixtableblock}
\begin{tabular}{lcc}
\toprule
\textbf{Method} & \textbf{MATH500} & \textbf{AIME2025} \\
\midrule
Vanilla & $48.13 \pm 0.64$ & $2.22 \pm 1.92$ \\
Temperature & $66.80 \pm 0.35$ & $7.78 \pm 1.92$ \\
MCMC & $72.60 \pm 2.03$ & $10.00 \pm 3.33$ \\
\rowcolor{gray!20}\method{} & $\textbf{77.60} \pm 1.56$ & $\textbf{16.67} \pm 3.33$ \\
\bottomrule
\end{tabular}
\end{appendixtableblock}
\end{table}

\section{Case Studies}
\label{app:case_studies}

We present illustrative examples demonstrating how \method{} identifies and corrects problematic reasoning patterns.

\subsection{Case Study 1: Methodological Improvement}

\textbf{Problem:} Find the sum of all integer bases $b > 9$ for which $17_b$ is a divisor of $97_b$.

\vspace{0.5em}
\noindent\textbf{Before Revision (Brute-force Enumeration):}

The model initially attempts to solve the problem by exhaustively checking each value of $k$:

\begin{quote}
\textit{``For $k = 2$, we get $b(9-2) = 7(2-1)$, which simplifies to $7b = 7$, but this does not satisfy $b > 9$. For $k = 3$, we get $b(9-3) = 7(3-1)$, which simplifies to $6b = 14$, and $b = \frac{14}{6}$, which is not an integer. For $k = 4$... For $k = 5$... For $k = 6$... For $k = 7$... For $k = 8$...''}
\end{quote}

This approach is tedious, error-prone, and lacks mathematical elegance.

\vspace{0.5em}
\noindent\textbf{After Revision (Modular Arithmetic):}

After detecting high entropy at the beginning of the solution, the model revises to use a more principled approach:

\begin{quote}
\textit{``We rewrite $9b + 7$ in terms of $b + 7$:}
\[9b + 7 = 9(b + 7) - 63 + 7 = 9(b + 7) - 56\]
\textit{Thus: $9b + 7 \equiv -56 \pmod{b + 7}$. For $b + 7$ to divide $9b + 7$, it must divide $-56$. Therefore, $b + 7$ must be a divisor of 56: $\{1, 2, 4, 7, 8, 14, 28, 56\}$.''}
\end{quote}

\noindent\textbf{Improvement:} The revision transforms a brute-force enumeration into an elegant number-theoretic solution, reducing the problem to finding divisors of 56.

\subsection{Case Study 2: Geometric Error Correction}

\textbf{Problem (2025 AIME I, Problem 2).} In triangle $ABC$, the points along $\overline{AB}$ occur in the order $A,D,E,B$, with $(AD,DE,EB)=(4,16,8)$. Along $\overline{AC}$, the order is $A,F,G,C$, with $(AF,FG,GC)=(13,52,26)$. Point $F$ is the midpoint of $\overline{DM}$, and point $E$ is the midpoint of $\overline{GN}$, defining the reflected points $M$ and $N$. Given that quadrilateral $DEGF$ has area $288$, determine the area of heptagon $AFNBCEM$~\cite{aops2025aime_i}.

\vspace{0.5em}
\noindent\textbf{Before Revision (Coordinate Error):}

The model makes a critical error in setting up coordinates:

\begin{quote}
\textit{``$F$ is on $\overline{AC}$ such that $AF = 13$ and $FG = 52$, so $F$ is at $(13, 0)$.''}
\end{quote}

\textbf{Error:} Point $F$ should lie on line $AC$, not on the $x$-axis (which represents line $AB$). This fundamental mistake would propagate through the entire solution.

\vspace{0.5em}
\noindent\textbf{After Revision (Side-Aligned Coordinates):}

The revised excerpt assigns points to the two side directions:

\begin{quote}
\textit{``Determine the total length of side $AB$: $AB = AD + DE + EB = 4 + 16 + 8 = 28$.\\
Determine the total length of side $AC$: $AC = AF + FG + GC = 13 + 52 + 26 = 91$.\\
Find the coordinates:
\begin{itemize}
    \item $D$ is at $(4, 0)$ on $AB$
    \item $E$ is at $(20, 0)$ on $AB$
    \item $F$ is at $(0, 13)$ on $AC$
    \item $G$ is at $(0, 65)$ on $AC$
\end{itemize}''}
\end{quote}

\noindent\textbf{Improvement:} The revision correctly distinguishes points on $AB$ from points on $AC$.

\noindent\textbf{Coordinate convention and area scaling.} With $A$ as the origin, let $\mathbf{e}_{AB}$ and $\mathbf{e}_{AC}$ be unit vectors along the two sides. We interpret a coordinate pair $(u,v)$ in the excerpt as the position vector $u\mathbf{e}_{AB}+v\mathbf{e}_{AC}$ in this oblique basis. If $\phi=\angle BAC$, Euclidean areas equal coordinate areas multiplied by $\sin\phi$. For $DEGF$, the coordinate area and the scale implied by its given Euclidean area are
\[
\begin{aligned}
S_{\mathrm{coord}}(DEGF)&=\frac{20\cdot65-4\cdot13}{2}=624,\\
\sin\phi&=\frac{288}{624}=\frac{6}{13}.
\end{aligned}
\]

\paragraph{Case Study Summary.}

These illustrative excerpts show changes in the reasoning text:
\begin{enumerate}
    \item \textbf{Improve methodology:} Transform inefficient enumeration into elegant mathematical solutions (Case 1)
    \item \textbf{Correct errors:} Identify and fix fundamental setup mistakes before they propagate (Case 2)
\end{enumerate}

The controlled final-answer comparisons in Section~\ref{subsec:main_results} quantify the overall accuracy gains from revision.

\section{Implementation Details and Algorithm}
\label{app:algorithm}

\subsection{Benchmark Subsets and Prompts}
\label{app:evaluation_protocol}

The five main benchmarks use the subsets in Table~\ref{tab:benchmark_protocol}. Each run evaluates the full listed subset, and accuracy is computed over questions within that run. AIME2025 combines all 15 questions from AIME I and all 15 from AIME II. BBH refers specifically to the \texttt{formal\_fallacies} task, and GPQA uses the Diamond subset.

\begin{table}[t]
\centering
\caption{\textbf{Evaluation subsets and BoN generation limits.} $n$ is the number of questions per run. The last column gives the maximum new tokens per BoN candidate.}
\label{tab:benchmark_protocol}
\begin{appendixtableblock}
\begin{tabular}{llrr}
\toprule
\textbf{Benchmark} & \textbf{Subset} & $\boldsymbol{n}$ & \textbf{BoN limit} \\
\midrule
GSM8K & Test & 1,319 & 2,048 \\
MATH500 & MATH test subset & 500 & 3,072 \\
GPQA & Diamond & 198 & 3,072 \\
AIME2025 & AIME I + II & 30 & 4,096 \\
BBH & \texttt{formal\_fallacies} & 250 & 3,072 \\
\bottomrule
\end{tabular}
\end{appendixtableblock}
\end{table}

\paragraph{Mathematical prompts and chat formatting.}
The zero-shot formatter for MATH500, GSM8K, and AIME2025 uses the following template, where \texttt{[QUESTION]} is replaced with the benchmark question:
\begin{quote}\small
Can you solve the following math problem? \texttt{[QUESTION]} Please reason step by step, and put your final answer within \texttt{\textbackslash boxed\{\}}.
\end{quote}
Qwen2.5-Math-7B and DeepSeek-1.5B receive this text directly. For Tülu-8B, we apply the tokenizer's \texttt{apply\_chat\_template} to the same text as a single user message, with \texttt{tokenize=False} and \texttt{add\_generation\_prompt} set to \texttt{True}. The resulting prompt has a \texttt{<|user|>} turn followed by a \texttt{<|assistant|>} generation prefix, with no added system message. The additional 8-shot experiments are reported in Appendix~\ref{app:fewshot_cot}.

\subsection{\texorpdfstring{Best-of-$N$}{Best-of-N} Sampling and Ranking}
\label{app:bon_protocol}

BoN@10 samples ten responses independently for each input at temperature $T=1.0$. Candidate generation uses top-$k=50$ and the checkpoint's nucleus-sampling setting: top-$p=1.0$ for Qwen2.5-Math-7B, $0.90$ for Tülu-8B (the DPO checkpoint), and $0.95$ for DeepSeek-1.5B. Typical sampling is disabled ($\texttt{typical\_p}=1.0$). Each candidate stops at EOS or the generation limit in Table~\ref{tab:benchmark_protocol}. For BBH and GPQA on Qwen2.5-Math-7B, the candidate limit is further bounded by $4096-L_{\text{prompt}}$ to fit the model's context length.

We decode each candidate and tokenize its response text separately from the prompt for teacher-forced rescoring. For response tokens $\mathbf{y}^{(i)}=(y^{(i)}_1,\ldots,y^{(i)}_{L_i})$ and formatted prompt $q$, selection uses
\begin{equation}
\begin{aligned}
\ell_i &= \frac{1}{L_i}\sum_{t=1}^{L_i}
\log p_\theta(y^{(i)}_t\mid q,\mathbf{y}^{(i)}_{<t}),\\
i^* &= \operatorname*{arg\,max}_{1\leq i\leq10}\ell_i.
\end{aligned}
\label{eq:bon_ranking}
\end{equation}
Scoring uses the model's original conditional probabilities before temperature scaling or top-$k$/top-$p$ filtering. Prompt, padding, and EOS tokens are excluded from both the score and its length normalization. Ties are resolved by the original candidate order. Empty responses are excluded from ranking, and candidates are not filtered by answer validity or correctness. Only the selected response is used for the reported single-output accuracy.

\subsection{Controller Implementation}

\paragraph{Sampling and stopping.}
Initial extensions and suffix proposals both use temperature $T=0.25$, with target power $\alpha=1/T=4$. Random and guided proposals use the same shared-horizon score in Equation~\ref{eq:acceptance_ratio}. Generation stops early at EOS. Prompts, initialization, block partitioning, generation limits, sampling settings, and acceptance scoring are shared with the MCMC baseline.

\paragraph{Boundary matching.}
A token belongs to $\mathcal{S}_{\text{bound}}$ if its decoded text contains a newline or one of \texttt{.}, \texttt{,}, \texttt{\$}, and \texttt{)}. We select the nearest matching token within the $k=10$ generated tokens preceding the traced position and begin regeneration after that token, retaining the entire matched token in the prefix. If no matching token is found, regeneration begins at the traced position. Appendix~\ref{app:supplementary_robustness} compares fixed and adaptive windows.

\paragraph{Software and hardware.}
The controller uses PyTorch and Hugging Face Transformers on NVIDIA H800 GPUs.

\paragraph{Code release and licensing.}
Experimental code is available in the repository linked in Section~\ref{subsec:setup}. We release implementation code only and have verified that the applicable upstream licenses permit this release. We do not redistribute pretrained model weights or benchmark datasets.

Algorithm~\ref{alg:entropy_revise} gives one proposal step. Repeated steps recompute the entropy trajectory of the retained sequence; iteration and block budgets are specified in the experimental setup.

\begin{algorithm}[t]
\caption{One Entropy-Guided Revision Step}
\label{alg:entropy_revise}
\begin{algorithmic}[1]
\REQUIRE Sequence $\mathbf{x}$, prompt length $L_{\text{context}}$, backward distance $w$, boundary distance $k$, delimiter tokens $\mathcal{S}_{\text{bound}}$, distributions $p_\theta,q_{\text{LM}}$, power $\alpha$
\ENSURE Updated sequence
\STATE \textbf{Phase 1: Peak Selection}
\STATE Compute entropy $H_t$ for $L_{\text{context}}<t\leq|\mathbf{x}|$
\STATE $t^*\gets\operatorname*{arg\,max}_{L_{\text{context}}<t\leq|\mathbf{x}|}H_t$
\STATE \textbf{Phase 2: Gradient Backtracking}
\IF{$t^*=L_{\text{context}}+1$}
    \STATE $t_{\text{start}}\gets t^*$
\ELSE
    \STATE $\mathcal{T}\gets[\max(L_{\text{context}}+2,t^*-w),t^*]$
    \STATE $\Delta H_t\gets H_t-H_{t-1}$ for $t\in\mathcal{T}$
    \STATE $t_{\text{start}}\gets\operatorname*{arg\,max}_{t\in\mathcal{T}}\Delta H_t$
\ENDIF
\STATE \textbf{Phase 3: Boundary Alignment}
\STATE $\mathcal{C}\gets\{t\in\mathcal{B}_k(t_{\text{start}}):x_t\in\mathcal{S}_{\text{bound}}\}$
\STATE $a\gets1+\max\mathcal{C}$ if $\mathcal{C}\ne\varnothing$; otherwise $a\gets t_{\text{start}}$
\STATE Generate $\mathbf{x}'_{\text{suffix}}\sim q_{\text{LM}}(\cdot\mid\mathbf{x}_{<a})$
\STATE $\mathbf{x}'\gets\mathbf{x}_{<a}\oplus\mathbf{x}'_{\text{suffix}}$
\STATE \textbf{Phase 4: MH-Style Acceptance}
\STATE $m\gets\min(|\mathbf{x}|,|\mathbf{x}'|)-a+1$
\STATE $\log r_{\text{suffix}}\gets S(\mathbf{x}',a;m)-S(\mathbf{x},a;m)$
\STATE Sample $u\sim\mathcal{U}(0,1)$
\IF{$\log u<\min(0,\log r_{\text{suffix}})$}
    \STATE \textbf{return} $\mathbf{x}'$
\ELSE
    \STATE \textbf{return} $\mathbf{x}$
\ENDIF
\end{algorithmic}
\end{algorithm}

\end{document}